\documentclass{article}

\usepackage[preprint]{colm2026_conference}

\usepackage[utf8]{inputenc}
\usepackage[T1]{fontenc}
\usepackage{hyperref}
\usepackage{url}
\usepackage{booktabs}
\usepackage{amsfonts}
\usepackage{amsmath}
\usepackage{amssymb}
\usepackage{amsthm}
\usepackage{nicefrac}
\usepackage{microtype}
\usepackage[table]{xcolor}
\usepackage{xspace}
\usepackage{graphicx}
\usepackage{bbm}
\usepackage{algorithm}
\usepackage{algorithmic}
\usepackage{multirow}
\usepackage{subcaption}
\usepackage{fontawesome5}
\usepackage{wrapfig}
\usepackage{tcolorbox}
\tcbuselibrary{breakable}

\definecolor{placeholdercolor}{HTML}{B45309}
\definecolor{bestrowblue}{HTML}{E8F0FE}
\definecolor{examplegreen}{HTML}{4D835B}
\definecolor{examplegreenbg}{HTML}{F5FAF5}
\definecolor{examplegreentext}{HTML}{355C3E}

\newtcolorbox{promptbox}[1][]{
  breakable,
  colback=white,
  colframe=black!70,
  coltitle=black,
  colbacktitle=black!5,
  fonttitle=\fontfamily{cmss}\selectfont\bfseries,
  fontupper=\fontfamily{cmr}\selectfont\small,
  arc=6pt,
  boxrule=0.7pt,
  left=12pt, right=12pt, top=10pt, bottom=10pt,
  toptitle=6pt, bottomtitle=6pt,
  before upper={\setlength{\parskip}{5pt}\setlength{\parindent}{0pt}},
  title={#1},
}
\newtcolorbox{examplebox}[1][]{
  breakable,
  colback=examplegreenbg,
  colframe=examplegreen,
  coltitle=examplegreentext,
  colbacktitle=examplegreen!15,
  fonttitle=\fontfamily{cmss}\selectfont\bfseries,
  fontupper=\fontfamily{cmr}\selectfont\small,
  arc=6pt,
  boxrule=0.7pt,
  left=12pt, right=12pt, top=10pt, bottom=10pt,
  toptitle=6pt, bottomtitle=6pt,
  before upper={\setlength{\parskip}{5pt}\setlength{\parindent}{0pt}},
  title={#1},
}
\definecolor{agentblue}{HTML}{2563EB}
\definecolor{envgray}{HTML}{6B7280}
\definecolor{errorred}{HTML}{DC2626}
\newtcolorbox{agentbox}[1][]{
  breakable,
  colback=white,
  colframe=agentblue!50,
  coltitle=white,
  colbacktitle=agentblue!80,
  fonttitle=\fontfamily{cmss}\selectfont\small\bfseries,
  fontupper=\fontfamily{cmr}\selectfont\small,
  arc=4pt,
  boxrule=0.6pt,
  left=8pt, right=8pt, top=6pt, bottom=6pt,
  toptitle=4pt, bottomtitle=4pt,
  before upper={\setlength{\parskip}{4pt}\setlength{\parindent}{0pt}},
  title={#1},
}
\newtcolorbox{envbox}[1][]{
  breakable,
  colback=black!2,
  colframe=envgray!40,
  coltitle=black,
  colbacktitle=envgray!10,
  fonttitle=\fontfamily{cmss}\selectfont\small\bfseries,
  fontupper=\fontfamily{cmr}\selectfont\small,
  arc=4pt,
  boxrule=0.6pt,
  left=8pt, right=8pt, top=6pt, bottom=6pt,
  toptitle=4pt, bottomtitle=4pt,
  before upper={\setlength{\parskip}{4pt}\setlength{\parindent}{0pt}},
  title={#1},
}
\newcommand{\ph}[1]{{\color{placeholdercolor}\texttt{\string{#1\string}}}}

\newcommand{\method}{Skill-SD\xspace}
\newcommand{\expect}{\mathbb{E}}
\newcommand{\kl}{\mathrm{KL}}
\DeclareMathOperator{\clip}{clip}

\newcommand{\dUp}[1]{$_{\textcolor{blue}{\uparrow\!#1}}$}
\newcommand{\dDn}[1]{$_{\textcolor{red}{\downarrow\!#1}}$}
\newcommand{\dZr}{$_{0.0}$}

\newtheorem{theorem}{Theorem}
\newtheorem{proposition}[theorem]{Proposition}

\theoremstyle{definition}
\newtheorem{definition}[theorem]{Definition}

\makeatletter
\def\thanks#1{\protected@xdef\@thanks{\@thanks
        \protect\footnotetext{\hspace{-2em}#1}}}
\makeatother

\title{\method: Skill-Conditioned Self-Distillation for Multi-turn LLM Agents}

\author{
  Hao Wang$^{*1,5\dagger}$, Guozhi Wang$^{*5\ddagger}$, Han Xiao$^{*2}$, Yufeng Zhou$^{5}$, Yue Pan$^{5}$, \\
  \textbf{Jichao Wang$^{1}$, Ke Xu$^{3}$, Yafei Wen$^{5}$, Xiaohu Ruan$^{5}$, Xiaoxin Chen$^{5}$, Honggang Qi$^{4\S}$} \\
  $^1$Hangzhou Institute for Advanced Study, University of Chinese Academy of Sciences\\
  $^2$The Chinese University of Hong Kong\\
  $^3$University of Science and Technology of China\\
  $^4$University of Chinese Academy of Sciences\\
  $^5$vivo AI Lab\\
  \texttt{wanghao251@mails.ucas.ac.cn}\quad \texttt{hgqi@ucas.ac.cn}\quad \texttt{11085439@vivo.com}\\
  $^*$Equal contribution \quad $^\S$Corresponding author \quad $^\ddagger$Project lead \quad $^\dagger$Intern at vivo\\[0.15em]
  {\normalfont\small {\color[HTML]{4A90D9}\faGlobe}\ \href{https://skill-sd.github.io/}{skill-sd.github.io}}
}

\begin{document}

\maketitle

\begin{abstract}
Reinforcement learning (RL) has been widely used to train LLM agents for multi-turn interactive tasks, but its sample efficiency is severely limited by sparse rewards and long horizons. On-policy self-distillation (OPSD) alleviates this by providing dense token-level supervision from a privileged teacher that has access to ground-truth answers. However, such fixed privileged information cannot capture the diverse valid strategies in agent tasks, and naively combining OPSD with RL often leads to training collapse. To address these limitations, we introduce \method, a framework that turns the agent's own trajectories into dynamic training-only supervision. Completed trajectories are summarized into compact natural language \emph{skills} that describe successful behaviors, mistakes, and workflows. These skills serve as dynamic privileged information conditioning only the teacher, while the student always acts under the plain task prompt and learns to internalize the guidance through distillation. To stabilize the training, we derive an importance-weighted reverse-KL loss to provide gradient-correct token-level distillation, and dynamically synchronize the teacher with the improving student. Experimental results on agentic benchmarks demonstrate that \method substantially outperforms the standard RL baseline, improving both vanilla GRPO (+14.0\%/+10.9\% on AppWorld/Sokoban) and vanilla OPD (+42.1\%/+40.6\%).
\end{abstract}

\section{Introduction}
\label{sec:intro}

Reinforcement learning (RL) has become the dominant paradigm for post-training LLM agents on multi-turn interactive tasks. These agents typically act over ReAct-style trajectories that interleave reasoning with external tool calls~\citep{yao2023react}, operating software through APIs~\citep{liu2025llm,trivedi2024appworld}, navigating web interfaces~\citep{yao2022webshop}, and solving sequential planning problems. Group Relative Policy Optimization (GRPO)~\citep{shao2024deepseekmath} enables effective policy optimization without a separate value network, and recent work shows that GRPO-based training can produce agents surpassing much larger models~\citep{chen2025loop,feng2025gigpo,zhang2025agentrl}. However, RL inherently suffers from sparse, delayed reward signals and high sample complexity. In long-horizon agentic tasks, the reward is often binary (task completed or not), making it extremely sparse and providing nearly negligible guidance about \emph{which} tokens or actions are actually useful.

On-policy distillation (OPD) addresses this gap by providing dense token-level supervision from a privileged teacher. On-policy self-distillation (OPSD) further removes the need for a seperate teacher model: the same model serves as both teacher (with privileged information) and student (without), enabling efficient single-model training. Self-Distilled Reasoner~\citep{zhao2026sdr} matches GRPO performance with 8--12$\times$ fewer tokens; SDPO~\citep{hubotter2026sdpo} reaches GRPO-level accuracy 6$\times$ faster; KDRL~\citep{xu2025kdrl} further shows that combining distillation with policy gradients outperforms either alone. These successes, however, are largely confined to tasks with unique ground-truth answers such as mathematical proofs and code solutions, where a fixed correct answer naturally serves as the teacher's privileged information.

Extending self-distillation to multi-turn agent tasks raises two challenges that prior work has not addressed. \textbf{(1)~What should the teacher know?} Unlike math problems with a single verifiable answer, agent tasks usually admit diverse valid strategies: an AppWorld task can be solved through different API call sequences, and a Sokoban puzzle can be approached from multiple directions. Conditioning the teacher on any single fixed solution constrains the student's exploration and fails to provide rich guidance. \textbf{(2)~How to keep training stable?} Naively coupling self-distillation with RL for agents often leads to training collapse. When the teacher generates trajectories (off-policy), its distribution drifts from the student's as training progresses, destabilizing the importance-weighted objective.

We present \method, a framework that turns the agent's own trajectory history into a dynamic, training-only teacher signal. \method summarizes completed trajectories into compact natural-language \emph{skills} that capture successful behaviors, mistakes, and high-level workflows. These skills provide reusable strategic guidance rather than a single fixed action sequence, allowing the student to explore its own solutions. 
During training, these skills condition only the teacher; the student generates on-policy trajectories under the plain task prompt. In this way, the student learns to internalize the skills without introducing retrieval dependence at inference time.
In this self-distillation setting, the teacher and the student share the same parameters but are conditioned on different prompts. This prompt discrepancy introduces a distribution mismatch similar to off-policy learning: the student's on-policy tokens are scored under the teacher's skill-augmented distribution.
Recent analyses show that naive $k_3$ differentiation can yield gradient-biased updates even in standard on-policy training~\citep{tang2025pitfalls}. The same issue persists in our cross-prompt setting. To address this, we derive an importance-weighted reverse-KL loss ($\rho \cdot k_3$) that corrects this distribution mismatch, ensuring unbiased token-level gradient updates for the self-distillation objective. Moreover, a frozen teacher inevitably falls behind as the student improves, degrading both the skill-conditioned guidance and the distillation signal. To keep the teacher calibrated, \method dynamically synchronizes the teacher from the latest student checkpoint, so that the privileged signal co-evolves with the student's improving policy. Our contributions are summarized as follows:

\begin{itemize}
    \item \textbf{Skill as dynamic teacher signal.} We propose using trajectory-derived natural-language skills as dynamic privileged information that conditions only the teacher during self-distillation. This preserves the diversity of valid action paths while allows for the skill internalization for the student model (\S\ref{sec:teacher}).

    \item \textbf{Importance-weighted reverse-KL loss.} We derive an importance-weighted reverse-KL loss for the cross-prompt self-distillation, where teacher and student share parameters but differ in prompt conditioning. This loss corrects the per-token gradient bias of the naive $k_3$ estimator under distribution mismatch (\S\ref{sec:sdl}, Appendix~\ref{app:sdl_gradient}).

    \item \textbf{Necessity of dynamic teacher synchronization.} Through systematic ablation, we show that periodically synchronizing the teacher with the improving student is essential for stable training: off-policy teacher-owned rollouts collapse during mid-training, while frozen teachers converge to lower plateaus. (\S\ref{sec:ablation}).
\end{itemize}

We evaluate \method on multiple agentic benchmarks. Using Qwen3-4B-Instruct-2507 as the base model, \method achieves 64.9\% accuracy on AppWorld and 62.5\% on Sokoban, outperforming vanilla GRPO by +14.0\% and +10.9\%, respectively, while also surpassing vanilla OPD by large margins. These results demonstrate the effectiveness of skill-conditioned self-distillation for training LLM agents.

\section{Related Work}
\label{sec:related}

\paragraph{On-policy distillation and self-distillation for LLMs.}
Recent self-distillation methods for LLMs~\citep{furlanello2018born,agarwal2024gkd} differ along two axes. First, some rely on distillation alone~\citep{zhao2026sdr,ye2026opcd}, whereas others integrate distillation with RL---by converting environment feedback into a dense self-distillation target~\citep{hubotter2026sdpo}, unifying KD and RL in a single objective~\citep{xu2025kdrl}, or mitigating their interference through selective imitation~\citep{zhang2026rlad}. Second, the privileged context is typically \emph{fixed or externally supplied}---ground-truth solutions, offline experience, or frontier-model trajectories as in concurrent work pi-Distill~\citep{penaloza2026pidistill}. Concurrent OEL~\citep{ye2026oel} extracts transferable experience from trajectories and consolidates it via context distillation, but uses distillation alone without RL reward signals; OpenClaw-RL~\citep{wang2026openclawrl} combines environment-derived next-state signals with RL for agent training. \method integrates self-distillation with RL in a joint objective, using analytical skills generated from the agent's own rollouts, and formalizes the cross-prompt importance-weighted KL estimator that this setting requires (Appendix~\ref{app:sdl_gradient}).

\paragraph{KL-regularized policy optimization and estimator design.}
A recent insight in the $k_1/k_2/k_3$ KL estimator family~\citep{schulman2017ppo,schulman2020kl} is that KL \emph{value} estimation and KL \emph{gradient} optimization are distinct problems: the widely used $k_3$ estimator provides unbiased value estimates but biased gradients when differentiated directly as a loss~\citep{tang2025pitfalls,liu2025rethinking}. RPG~\citep{zhang2025rpg} further studies the interaction of importance weighting, KL direction, and clipped updates at scale. These analyses consider the standard same-prompt setting; our cross-prompt configuration, where teacher and student share parameters but differ in prompt conditioning, requires the importance weighting derived in Appendix~\ref{app:sdl_gradient}.

\paragraph{Reinforcement learning for multi-turn LLM agents.}
GRPO~\citep{shao2024deepseekmath,deepseekr1} has become the predominant RL algorithm for LLM training due to its memory efficiency. Adapting RL to multi-turn agent settings introduces new challenges: LOOP~\citep{chen2025loop} adapts PPO for long-horizon tasks without a value network, GiGPO~\citep{feng2025gigpo} proposes two-level advantage estimation for multi-turn GRPO, RAGEN~\citep{wang2025ragen} identifies the ``echo trap'' instability, and AgentEvolver~\citep{zhai2025agentevolver} introduces self-attributing mechanisms for fine-grained credit assignment. Agent-R1~\citep{cheng2025agentr1}, AgentGym-RL~\citep{agentgymrl}, and AgentRL~\citep{zhang2025agentrl} scale multi-turn RL across diverse environments with principled environment design~\citep{wang2025practitioner, zhao2025mas, liu2025learnact}. These works improve RL algorithms or infrastructure for agents but do not incorporate self-distillation. \method complements this line by adding an auxiliary distillation loss that transfers skill-conditioned teacher knowledge alongside RL reward signals.

\paragraph{Experience, reflection, and memory in agents.}
Reflexion~\citep{shinn2023reflexion} stores self-reflections in episodic memory, ExpeL~\citep{zhao2023expel} extracts reusable experience, Agent-R~\citep{yuan2025agentr} trains agents to reflect via iterative self-training, ECHO~\citep{hu2025echo} rewrites hindsight trajectories, and EvolveR~\citep{wu2025evolver} synthesizes trajectory outcomes into strategic principles. UI-Genie~\citep{xiao2025ui} iteratively boosts agent performance through a self-improving loop between agent model and reward model, while UI-Mem~\citep{xiao2026ui, liu2026memgui} builds a self-evolving experience memory for online RL.
A common design across these works is to retrieve experience and append it to the agent's prompt to improve trajectory generation. This creates two related tensions: within training, trajectories are generated under $\pi(a \mid h, e)$ but the policy-gradient update targets $\pi(a \mid h, \varnothing)$, inducing an unmodeled behavior--target mismatch in the importance ratio; at inference time, stripping experience causes a performance gap, while retaining it makes the policy dependent on retrieval. \method avoids both issues by conditioning only the teacher on skills during training, while the student generates on-policy trajectories under the plain task prompt. The auxiliary SDL loss distills the teacher's token-level guidance into the student, so the student is trained and evaluated under the same plain prompt, without any retrieval dependency.

\section{Method}
\label{sec:method}

\begin{figure}[t]
\centering
\includegraphics[width=\textwidth]{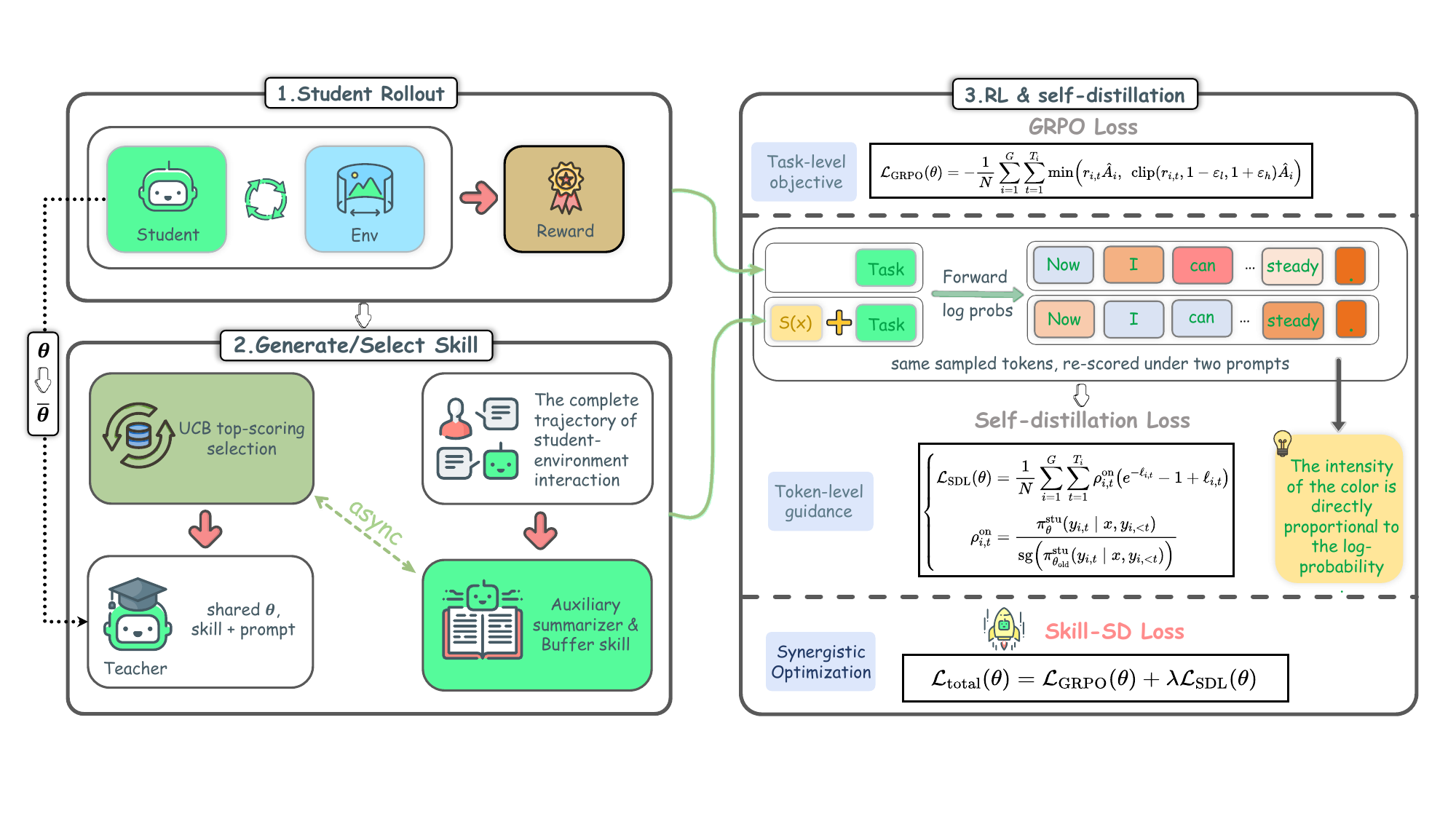}
\caption{\method overview. (1)~The student generates on-policy rollouts and receives task-level rewards. (2)~Completed trajectories are asynchronously summarized into compact skills and stored in a UCB-indexed buffer. (3)~The same token sequence is re-scored under the student prompt and the skill-augmented teacher prompt; GRPO provides task-level credit assignment while the importance-weighted SDL loss transfers token-level teacher knowledge. Token color intensity reflects prediction probability.}
\label{fig:overview}
\end{figure}

\subsection{Problem Setup and GRPO Backbone}
\label{sec:setup}

Figure~\ref{fig:overview} illustrates the overall \method pipeline. Each task instance \(x\) defines a multi-turn interaction with horizon \(H\). A rollout \(\tau = (y_1,\dots,y_T)\) denotes the sequence of action tokens emitted by the agent while interacting with the environment, where \(T \leq H\) counts all action tokens across turns. We use a \emph{completion-rate} reward
\begin{equation}
R(x,\tau) = \frac{1}{K}\sum_{k=1}^{K}\mathbbm{1}\!\left[c_k(\tau)\ \text{is satisfied}\right] \in [0,1],
\label{eq:reward}
\end{equation}
where \(\{c_k\}_{k=1}^{K}\) are task-specific verification criteria: state-based unit tests in AppWorld~\citep{trivedi2024appworld} and boxes-on-target counts in Sokoban (see \S\ref{sec:exp_setup} for details). This fine-grained reward preserves partial progress that would be discarded by a binary success signal.

For each task, we sample a group of \(G\) trajectories from the student policy and normalize rewards within the group:
\begin{equation}
\hat{A}_i = \frac{R_i - \frac{1}{G}\sum_{j=1}^{G} R_j}{\mathrm{std}(R_1,\dots,R_G) + \epsilon},
\label{eq:adv}
\end{equation}
where \(R_i = R(x,\tau_i)\). The GRPO objective is:
\begin{equation}
\mathcal{L}_{\text{GRPO}}(\theta)
=
-\frac{1}{N}\sum_{i=1}^{G}\sum_{t=1}^{T_i}
\min\!\left(
r_{i,t}\hat{A}_i,\;
\clip(r_{i,t},1-\varepsilon_l,1+\varepsilon_h)\hat{A}_i
\right),
\label{eq:grpo}
\end{equation}
with student trust-region ratio
\begin{equation}
r_{i,t}
=
\frac{\pi_{\theta}^{\text{stu}}(y_{i,t}\mid x,y_{i,<t})}
{\operatorname{sg}\!\left(\pi_{\theta_{\text{old}}}^{\text{stu}}(y_{i,t}\mid x,y_{i,<t})\right)},
\label{eq:ratio}
\end{equation}
and \(N=\sum_{i=1}^{G}T_i\) is the total number of valid action tokens in the group, implementing \texttt{token-mean} reduction where both long and short trajectories contribute in proportion to their valid-token counts. Following DAPO~\citep{yu2025dapo}, we use asymmetric clipping bounds with \(\varepsilon_h > \varepsilon_l\) to prevent premature entropy collapse.

\subsection{Skill-Conditioned Teacher and Dynamic Self-Evolution}
\label{sec:teacher}

Unlike math or code generation, multi-turn agent tasks rarely have a unique ground-truth solution: many action sequences can solve the same task. This rules out fixed-reference distillation but leaves room for a softer form of privileged information: structured summaries of what worked and what failed in past attempts. Injecting such summaries into the student would leak privileged information into the evaluation-time policy; conditioning only the teacher preserves a clean student interface while still transferring the knowledge gap.

The teacher in \method is therefore \emph{skill-conditioned}: it receives the same task prompt augmented with compact natural-language skills extracted from prior trajectories. Each skill summarizes three aspects of an attempt:
\begin{equation}
e = \mathrm{Analyze}(\tau,x) =
\bigl(
e_{\text{success}},
e_{\text{mistake}},
e_{\text{workflow}}
\bigr),
\label{eq:exp}
\end{equation}
capturing what worked, what failed, and what high-level workflow should be followed next time. We do not store full trajectories as skills. Multi-turn tasks admit many valid action sequences, and a single canonical trace would overconstrain exploration.

Given task \(x\), the student acts on the plain prompt, while the teacher sees the prompt augmented with retrieved skills:
\begin{align}
\pi_{\theta}^{\text{stu}}(\cdot \mid x, y_{<t})
&=
\pi_{\theta}(\cdot \mid x, y_{<t}), \label{eq:student_policy} \\
\pi_{\bar{\theta}}^{\text{tea}}(\cdot \mid x, S(x), y_{<t})
&=
\pi_{\bar{\theta}}(\cdot \mid x \oplus S(x), y_{<t}). \label{eq:teacher_policy}
\end{align}
Here \(S(x)\) denotes the retrieved skills and \(\bar{\theta}\) is the teacher parameter state. In the \textbf{dynamic} setting, \(\bar{\theta}\) is synchronized from the latest student checkpoint at each iteration; in the \textbf{frozen} setting, \(\bar{\theta}\) is fixed throughout training.

Skill retrieval is lightweight. For each task, we select the single highest-scoring skill using a UCB criterion:
\begin{equation}
\mathrm{score}(e) = \bar{r}(e) + c\sqrt{\frac{\ln N_{\text{ucb}}}{n(e)}},
\label{eq:ucb}
\end{equation}
where \(\bar{r}(e)\) is the mean reward of skill \(e\), \(N_{\text{ucb}}\) is the total number of retrievals for the same task, \(n(e)\) is the number of times \(e\) has been selected for that task, and \(c\) controls the exploration--exploitation tradeoff. Skills with \(n(e)=0\) are selected first, ensuring that newly generated skills are always tried before the UCB score is computed. All statistics are maintained per-task. An auxiliary LLM is used only to summarize trajectories into skills; it does not participate in the optimization objective. This is a task-local form of context distillation~\citep{snell2022learning,hsieh2023distilling}: the teacher gets richer context during training, but the student does not receive it at test time.

\subsection{Importance-weighted reverse-KL loss}
\label{sec:sdl}

The skill-conditioned teacher provides a richer action distribution than the student, but distilling from it is not straightforward. Student and teacher condition on different prompts, so their log-probability ratio is no longer a standard on-policy quantity. Naively differentiating the $k_3$ divergence estimator yields biased gradients even in standard on-policy settings~\citep{tang2025pitfalls}; in our cross-prompt regime the same issue persists, so an explicit importance-correction term is needed to restore per-token unbiasedness.

In \method, trajectories are sampled from the old student under the plain prompt, and the skill-conditioned teacher re-scores those same tokens under the augmented prompt. Specifically, we sample
\begin{equation}
\tau_i \sim \pi_{\theta_{\text{old}}}^{\text{stu}}(\cdot \mid x),
\label{eq:student_rollout}
\end{equation}
while recording the old-student log-probabilities during rollout. We then re-forward the sampled sequence under the current student and under the fixed teacher view \(\pi_{\bar{\theta}}^{\text{tea}}\) to obtain \(\log \pi_{\theta}^{\text{stu}}\) and \(\log \pi_{\bar{\theta}}^{\text{tea}}\).

For each sampled token, define the current-student to teacher-reference log-ratio
\begin{equation}
\ell_{i,t}
=
\log \pi_{\theta}^{\text{stu}}(y_{i,t}\mid x,y_{i,<t})
-
\log \pi_{\bar{\theta}}^{\text{tea}}(y_{i,t}\mid x,S(x),y_{i,<t}),
\label{eq:logratio}
\end{equation}
and the on-policy importance weight
\begin{equation}
\rho_{i,t}^{\text{on}}
=
\frac{\pi_{\theta}^{\text{stu}}(y_{i,t}\mid x,y_{i,<t})}
{\operatorname{sg}\!\left(\pi_{\theta_{\text{old}}}^{\text{stu}}(y_{i,t}\mid x,y_{i,<t})\right)}.
\label{eq:rho}
\end{equation}
We then optimize the auxiliary self-distillation loss
\begin{equation}
\mathcal{L}_{\text{SDL}}(\theta)
=
\frac{1}{N}\sum_{i=1}^{G}\sum_{t=1}^{T_i}
\rho_{i,t}^{\text{on}}\left(
e^{-\ell_{i,t}} - 1 + \ell_{i,t}
\right).
\label{eq:sdl}
\end{equation}

The auxiliary SDL term transfers teacher-side skill knowledge into the student under the correct sampling distribution. The weight \(\rho_{i,t}^{\text{on}}\) provides the necessary importance correction, since differentiating bare $k_3$ directly produces biased gradients~\citep{tang2025pitfalls,liu2025rethinking}. The formal gradient identity is given in Appendix~\ref{app:sdl_gradient}.

For theoretical completeness and for the ablation study in \S\ref{sec:ablation}, we also derive the importance weight under teacher-owned (off-policy) rollout, where trajectories are sampled from \(\pi_{\theta_{\text{old}}}^{\text{tea}}(\cdot \mid x \oplus S(x))\). The same construction yields
\begin{equation}
\rho_{i,t}^{\text{off}}
=
\frac{\pi_{\theta}^{\text{stu}}(y_{i,t}\mid x,y_{i,<t})}
{\operatorname{sg}\!\left(\pi_{\theta_{\text{old}}}^{\text{tea}}(y_{i,t}\mid x,S(x),y_{i,<t})\right)},
\label{eq:rho_off}
\end{equation}
preserving the same gradient-correct interpretation under teacher sampling (Appendix~\ref{app:corollaries}).

Equation~\eqref{eq:sdl} is a \emph{sampled-token} reverse-KL objective rather than a full-vocabulary distillation loss, and it uses the same \texttt{token-mean} aggregation as the GRPO term. For long-horizon agent traces, sampled-token distillation avoids the memory overhead of storing full-vocabulary logits at every token position and requires only the re-forward pass that is already needed for computing $r_{i,t}$.

\paragraph{Visualizing distillation dynamics.}
Figure~\ref{fig:sdl_viz} visualizes how the student and teacher token distributions evolve during training on a representative AppWorld task. At the start of training, the student and teacher assign notably different probabilities to the same tokens; by the end, their distributions have largely converged. The teacher's distribution remains relatively stable across epochs, while the student progressively learns to match it through the SDL loss. The SDL loss curve (right) decreases by 59.3\% over training, confirming this convergence.

\begin{figure}[t]
\centering
\includegraphics[width=\textwidth]{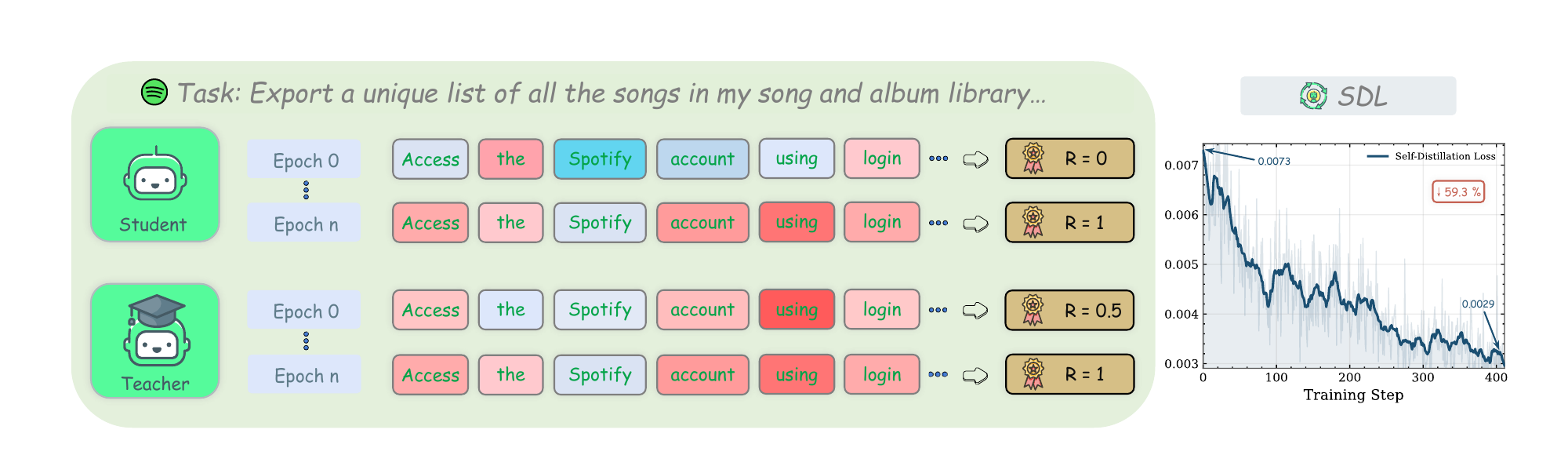}
\caption{Token-level self-distillation dynamics on an AppWorld task. Token color intensity is proportional to prediction probability. At Epoch~0, student and teacher distributions differ substantially; after convergence, they have largely aligned. Right: the SDL loss decreases by 59.3\% over training.}
\label{fig:sdl_viz}
\end{figure}

\subsection{Training Objective and Procedure}
\label{sec:training}

GRPO and SDL are complementary: GRPO uses trajectory-level reward to select good rollouts, while SDL uses the skill-conditioned teacher to redistribute probability mass at the token level within those rollouts. Neither signal alone suffices. RL without per-token guidance is sample-inefficient in long-horizon tasks, and distillation without reward grounding can drift toward the teacher even when the teacher is wrong. Combining them lets the reward filter bad trajectories while the teacher accelerates credit assignment on good ones.

The overall training objective combines GRPO reward maximization with the auxiliary SDL term:
\begin{equation}
\mathcal{L}_{\text{total}}(\theta)
=
\mathcal{L}_{\text{GRPO}}(\theta)
+
\lambda \mathcal{L}_{\text{SDL}}(\theta),
\label{eq:total}
\end{equation}
where \(\lambda\) controls the distillation strength. The two terms act at different resolutions: \(\mathcal{L}_{\text{GRPO}}\) provides task-level credit assignment through group-relative advantages, determining \emph{which} trajectories should be reinforced, while \(\mathcal{L}_{\text{SDL}}\) supplies dense token-level guidance toward the skill-conditioned teacher, shaping \emph{how} probability mass is redistributed along those trajectories. These signals are often complementary, but they can disagree when teacher-preferred actions are not aligned with the reward-improving direction on a given sample. The coefficient \(\lambda\) mediates this trade-off: small \(\lambda\) recovers nearly pure GRPO, while larger \(\lambda\) more strongly constrains the student toward the teacher distribution. Within a given update, Proposition~\ref{prop:sdl_grad} shows that \(\nabla_\theta(\rho \cdot k_3) = \rho \cdot s_\theta \cdot \ell_t\), so the SDL gradient is exactly zero on sampled tokens where teacher and student agree (\(\ell_t = 0\)). In the on-policy regime, where \(\rho_t^{\text{on}}\) stays near \(1\) under small PPO-style updates, the SDL term is correspondingly self-damping as student--teacher disagreement shrinks; Figure~\ref{fig:sdl_viz} confirms this convergence empirically. This property makes SDL an auxiliary shaping signal rather than the primary learning driver, motivating a small \(\lambda\) in practice. Note that although \(\rho_t^{\text{on}}\) and the GRPO clipping ratio \(r_t\) coincide numerically in the main branch, they serve distinct roles---importance correction for SDL versus student-centered trust region for GRPO---as detailed in Appendix~\ref{app:dual_ratio} (Table~\ref{tab:ratio_comparison}). Algorithm~\ref{alg:skill-sd} summarizes a single training iteration.

\begin{algorithm}[t]
\caption{\method training}
\label{alg:skill-sd}
\begin{algorithmic}[1]
\REQUIRE Student policy $\pi_{\theta}$, skill bank $\mathcal{B}$, group size $G$, SDL coefficient $\lambda$, learning rate $\eta$
\FOR{iteration $= 1, 2, \dots$}
    \STATE $\theta_{\text{old}} \leftarrow \theta$;\; sync teacher $\bar{\theta} \leftarrow \theta$ \hfill $\triangleright$ \textit{dynamic}; or keep $\bar{\theta}$ fixed (\textit{frozen})
    \STATE \textbf{--- Rollout ---}
    \FOR{each task $x$ in the batch}
        \STATE $S(x) \leftarrow \mathrm{UCB\text{-}retrieve}\!\left(\mathcal{B}(x)\right)$ \hfill $\triangleright$ Eq.~\ref{eq:ucb}; $\emptyset$ if no prior skills
        \STATE Sample $\{\tau_i\}_{i=1}^{G} \sim \pi_{\theta_{\text{old}}}^{\text{stu}}(\cdot \mid x)$;\; record $\log \pi_{\theta_{\text{old}}}^{\text{stu}}$
        \STATE Compute rewards $R_i = R(x,\tau_i)$ and advantages $\hat{A}_i$ \hfill $\triangleright$ Eq.~\ref{eq:adv}
    \ENDFOR
    \STATE \textbf{--- Re-score ---}
    \FOR{each $(x, \{\tau_i\})$}
        \STATE Compute $\log \pi_{\theta}^{\text{stu}}(y_{i,t} \mid x,\, y_{i,<t})$ \hfill $\triangleright$ current student, plain prompt
        \STATE Compute $\log \pi_{\bar{\theta}}^{\text{tea}}(y_{i,t} \mid x \oplus S(x),\, y_{i,<t})$ \hfill $\triangleright$ teacher, augmented prompt
    \ENDFOR
    \STATE \textbf{--- Policy Update ---}
    \STATE $r_{i,t} = \rho_{i,t}^{\text{on}} \leftarrow \pi_{\theta}^{\text{stu}} \big/ \operatorname{sg}\!\left(\pi_{\theta_{\text{old}}}^{\text{stu}}\right)$ \hfill $\triangleright$ trust-region ratio \& IS weight
    \STATE $\ell_{i,t} \leftarrow \log \pi_{\theta}^{\text{stu}} - \log \pi_{\bar{\theta}}^{\text{tea}}$ \hfill $\triangleright$ student--teacher log-ratio (Eq.~\ref{eq:logratio})
    \STATE $\mathcal{L} \leftarrow \mathcal{L}_{\text{GRPO}}(\theta;\, r,\, \hat{A})\ +\ \lambda\,\mathcal{L}_{\text{SDL}}(\theta;\, \rho^{\text{on}},\, \ell)$ \hfill $\triangleright$ Eqs.~\ref{eq:grpo},\,\ref{eq:sdl}
    \STATE $\theta \leftarrow \theta - \eta\,\nabla_{\theta}\mathcal{L}$
    \STATE \textit{Async:} summarize new trajectories $\to$ skills; update $\mathcal{B}$ \hfill $\triangleright$ consumed next iter.
\ENDFOR
\end{algorithmic}
\end{algorithm}

\section{Experiments}
\label{sec:experiments}

\subsection{Setup}
\label{sec:exp_setup}

\paragraph{Benchmarks.}
We evaluate \method on two multi-turn agentic environments that test complementary capabilities. AppWorld~\citep{trivedi2024appworld} emphasizes real-world API coordination, multi-app state management, and adaptive replanning from environment feedback; errors are generally recoverable through subsequent API calls. Sokoban, by contrast, demands spatial reasoning and long-horizon planning under high irreversibility: a single misstep (e.g., pushing a box into a corner) can render the puzzle unsolvable. Together, the two benchmarks span the spectrum from feedback-driven tool use to deliberative forward planning, and give a more complete picture of agent capabilities.

\textbf{AppWorld}~\citep{trivedi2024appworld} is a comprehensive multi-app API benchmark with 9 simulated consumer applications and 457 APIs. Tasks require multi-step reasoning, cross-application coordination, and iterative API interactions. Correctness is verified by state-based unit tests that check whether the correct state changes were made without undesired side effects. The official training set contains 105 tasks, of which 90 are publicly available~\citep{trivedi2024appworld,chen2025loop}. We train on all 90 available tasks and evaluate on the 57-task development set, with a maximum of $H = 40$ interaction turns.

\textbf{Sokoban} is a classic puzzle-planning benchmark requiring strategic reasoning and long-horizon planning. We use $6 \times 6$ rooms with 2 boxes and a maximum of $H = 40$ steps. Levels are procedurally generated using the reverse-play method of \texttt{gym-sokoban}~\citep{schrader2018gymsokoban}, where the \emph{search depth} parameter controls the depth of the reverse DFS and thus the minimum solution complexity. Since task diversity is limited, we adopt a difficulty curriculum~\citep{bengio2009curriculum} to maximize signal extraction from each level: the training set consists of 96 levels where the easiest 30\% use search depth 15, the middle 50\% use depth 20, and the hardest 20\% use depth 25. The test set consists of 64 levels with uniform search depth 25.

\paragraph{Models and metrics.}
All experiments use Qwen3-4B-Instruct-2507~\citep{qwen3} as the base model (full hyperparameters in Table~\ref{tab:hyperparams}). We report \textbf{accuracy} (pass@1) and \textbf{completion rate} on the AppWorld 57-task development set and the Sokoban 64-level test set. The completion rate is the same dense reward used for training (Eq.~\ref{eq:reward}).

\paragraph{Baselines.}
To isolate the contribution of each component, we compare against:
\begin{itemize}
    \item \textbf{Vanilla GRPO}: standard GRPO with completion-rate reward, without skill prompting or self-distillation.
    \item \textbf{Skill-Augmented GRPO}: GRPO with skills prepended to the rollout prompt, but without the SDL loss ($\lambda = 0$). This isolates the effect of skill prompting from gradient-based distillation.
    \item \textbf{Vanilla OPD}: on-policy distillation without RL reward. The student generates rollouts under the plain prompt; the reverse-KL divergence between the skill-conditioned frozen teacher and the student at each token serves as the per-token advantage in a policy gradient objective. No environment reward is used ($\mathcal{L}_{\text{GRPO}} = 0$), so learning is driven entirely by the distillation signal.
    \item \textbf{Base model}: the Qwen3-4B-Instruct-2507 checkpoint without agent finetuning.
\end{itemize}

\subsection{Main Results}
\label{sec:main_results}

\begin{table}[t]
\centering
\caption{Main results on AppWorld and Sokoban (Qwen3-4B-Instruct-2507). \method uses on-policy student rollout with dynamic teacher synchronization. Subscripts denote absolute change from the base model.}
\label{tab:appworld_main}
\small
\setlength{\tabcolsep}{5pt}
\begin{tabular}{lcccccc}
\toprule
& \multicolumn{2}{c}{\textbf{AppWorld}} & \multicolumn{2}{c}{\textbf{Sokoban}} & \multicolumn{2}{c}{\textbf{Avg.}} \\
\cmidrule(lr){2-3} \cmidrule(lr){4-5} \cmidrule(lr){6-7}
\textbf{Method} & \textbf{Acc.} & \textbf{Comp.} & \textbf{Acc.} & \textbf{Comp.} & \textbf{Acc.} & \textbf{Comp.} \\
\midrule
Base Model & 8.8\dZr & 39.1\dZr & 12.5\dZr & 32.0\dZr & 10.6\dZr & 35.6\dZr \\
Vanilla OPD & 22.8\dUp{14.0} & 59.7\dUp{20.6} & 21.9\dUp{9.4} & 37.5\dUp{5.5} & 22.4\dUp{11.7} & 48.6\dUp{13.0} \\
Vanilla GRPO & 50.9\dUp{42.1} & 76.3\dUp{37.2} & 51.6\dUp{39.1} & 68.8\dUp{36.8} & 51.2\dUp{40.6} & 72.5\dUp{36.9} \\
Skill-Augmented GRPO & 42.1\dUp{33.3} & 76.1\dUp{37.0} & 20.3\dUp{7.8} & 37.5\dUp{5.5} & 31.2\dUp{20.6} & 56.8\dUp{21.2} \\
\midrule
\rowcolor{bestrowblue}
\textbf{\method} & \textbf{64.9}\dUp{56.1} & \textbf{84.9}\dUp{45.8} & \textbf{62.5}\dUp{50.0} & \textbf{71.1}\dUp{39.1} & \textbf{63.7}\dUp{53.1} & \textbf{78.0}\dUp{42.4} \\
\bottomrule
\end{tabular}
\end{table}

Table~\ref{tab:appworld_main} compares \method against all baselines on both benchmarks. On AppWorld, \method achieves 64.9\% accuracy and 84.9\% completion rate, outperforming Skill-Augmented GRPO by +22.8\% accuracy and Vanilla GRPO by +14.0\%. On Sokoban, \method reaches 62.5\% accuracy and 71.1\% completion rate, surpassing Vanilla GRPO by +10.9\%. Skill-Augmented GRPO underperforms Vanilla GRPO on both benchmarks (42.1\% vs.\ 50.9\% on AppWorld; 20.3\% vs.\ 51.6\% on Sokoban), demonstrating that directly injecting skills into the student prompt is counterproductive. Vanilla OPD remains the weakest reward-free method on both benchmarks, confirming that self-distillation alone cannot replace reward-driven optimization.

\paragraph{Training dynamics.}
Figure~\ref{fig:training_baselines} shows the training dynamics of all baselines on both benchmarks. On AppWorld, Skill-Augmented GRPO achieves the highest \emph{training} accuracy but overfits severely: its validation accuracy (42.1\%, Table~\ref{tab:appworld_main}) falls well below \method (64.9\%). \method improves steadily on both training and validation throughout. On Sokoban, \method starts slower than Vanilla GRPO but overtakes it after step~60 and continues to improve steadily. In both environments, Vanilla OPD remains at low performance throughout training.

\begin{figure}[t]
\centering
\begin{subfigure}[t]{0.49\textwidth}
    \centering
    \includegraphics[width=\textwidth]{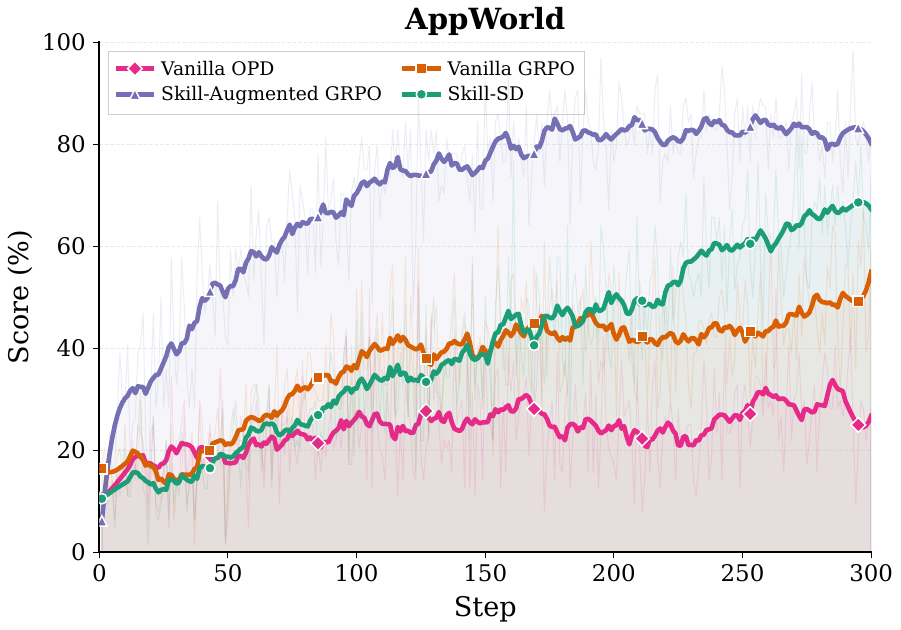}
\end{subfigure}
\hfill
\begin{subfigure}[t]{0.49\textwidth}
    \centering
    \includegraphics[width=\textwidth]{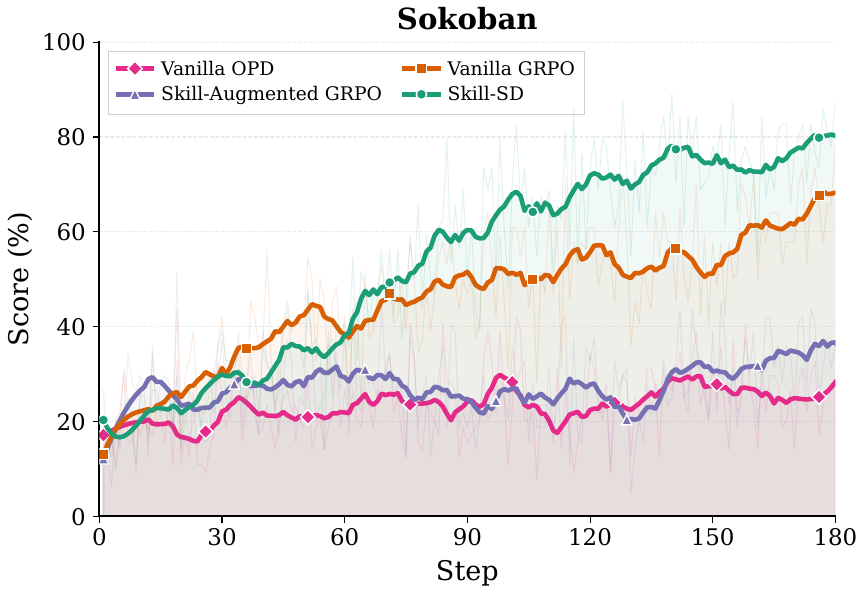}
\end{subfigure}
\caption{Training curves for all baselines on AppWorld (left) and Sokoban (right). On AppWorld, Skill-Augmented GRPO achieves the highest \emph{training} accuracy while \method steadily narrows the gap. On Sokoban, \method overtakes Vanilla GRPO after step~60 and maintains steady gains.}
\label{fig:training_baselines}
\end{figure}

\subsection{Ablation Analysis}
\label{sec:ablation}

We validate \method's two core design choices by examining what happens when each is replaced. The first choice is \emph{student-owned rollout}: in \method, the student generates trajectories under the plain prompt and the teacher re-scores them; the alternative is teacher-owned rollout, where the teacher generates trajectories under the skill-augmented prompt and the student re-forwards them. The second choice is \emph{dynamic teacher synchronization}: in \method, the teacher is synchronized from the latest student checkpoint at each iteration; the alternative is a frozen teacher that retains its initial parameters throughout training. Table~\ref{tab:ablation} reports the effect of each replacement.

\begin{table}[t]
\centering
\caption{Ablation: effect of replacing \method's design choices. \method uses on-policy student rollout with dynamic teacher synchronization (bolded). Subscripts denote absolute change from \method. $^\ast$Training collapsed during mid-training; values reflect the checkpoint before collapse.}
\label{tab:ablation}
\small
\setlength{\tabcolsep}{4pt}
\begin{tabular}{llcccccc}
\toprule
& & \multicolumn{2}{c}{\textbf{AppWorld}} & \multicolumn{2}{c}{\textbf{Sokoban}} & \multicolumn{2}{c}{\textbf{Avg.}} \\
\cmidrule(lr){3-4} \cmidrule(lr){5-6} \cmidrule(lr){7-8}
\textbf{Rollout} & \textbf{Teacher} & \textbf{Acc.} & \textbf{Comp.} & \textbf{Acc.} & \textbf{Comp.} & \textbf{Acc.} & \textbf{Comp.} \\
\midrule
On-policy & Frozen & 49.1\dDn{15.8} & 79.0\dDn{5.9} & 50.0\dDn{12.5} & 63.3\dDn{7.8} & 49.6\dDn{14.1} & 71.1\dDn{6.9} \\
\rowcolor{bestrowblue}
\textbf{On-policy} & \textbf{Dynamic} & \textbf{64.9}\dZr & \textbf{84.9}\dZr & \textbf{62.5}\dZr & \textbf{71.1}\dZr & \textbf{63.7}\dZr & \textbf{78.0}\dZr \\
\midrule
Off-policy$^\ast$ & Frozen & \color{gray}45.6\dDn{19.3} & \color{gray}78.8\dDn{6.1} & \color{gray}12.5\dDn{50.0} & \color{gray}31.3\dDn{39.8} & \color{gray}29.1\dDn{34.6} & \color{gray}55.0\dDn{23.0} \\
Off-policy$^\ast$ & Dynamic & \color{gray}42.1\dDn{22.8} & \color{gray}76.5\dDn{8.4} & \color{gray}10.9\dDn{51.6} & \color{gray}32.0\dDn{39.1} & \color{gray}26.5\dDn{37.2} & \color{gray}54.3\dDn{23.7} \\
\bottomrule
\end{tabular}
\end{table}

\begin{figure}[t]
\centering
\begin{subfigure}[t]{0.49\textwidth}
    \centering
    \includegraphics[width=\textwidth]{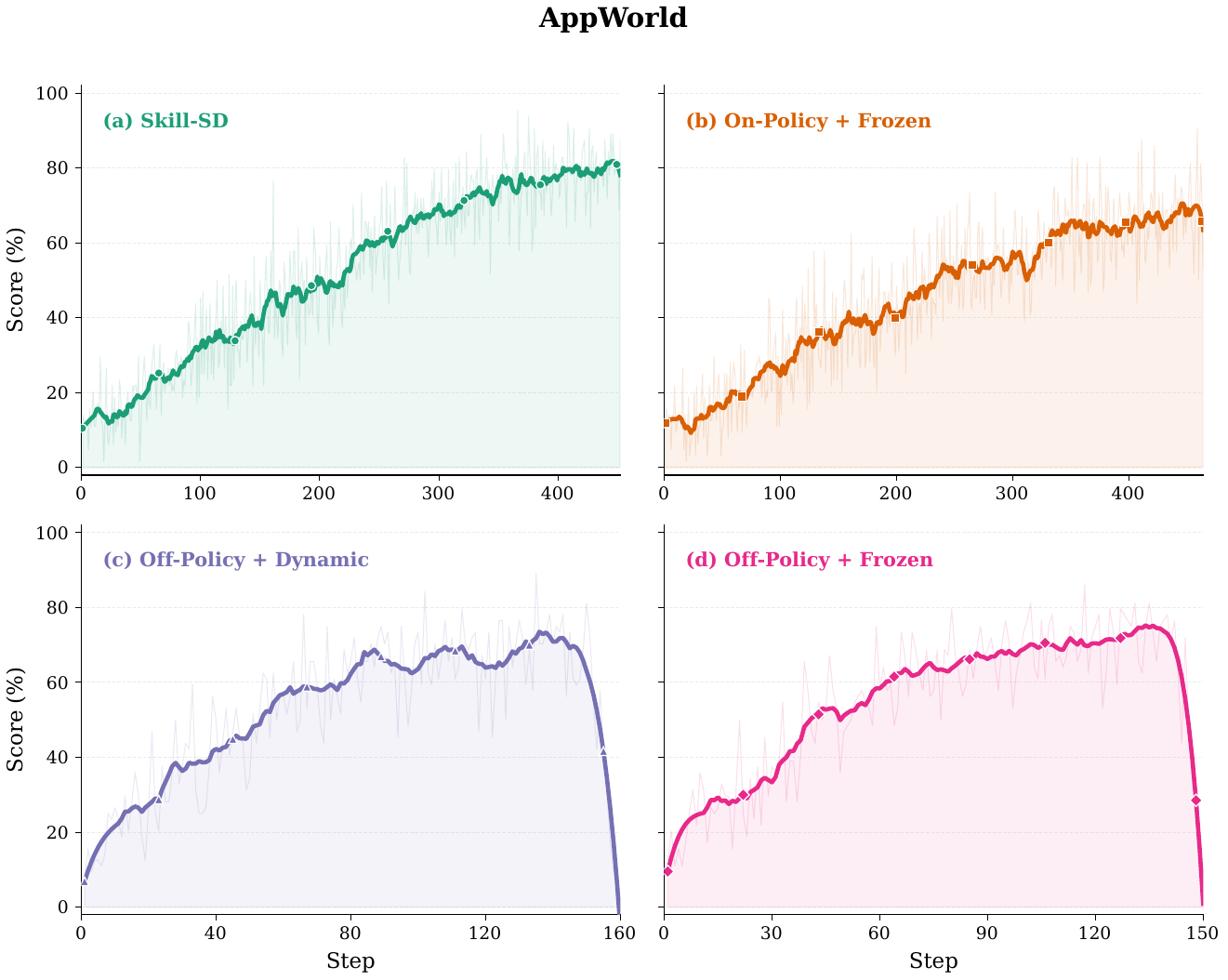}
\end{subfigure}
\hfill
\begin{subfigure}[t]{0.49\textwidth}
    \centering
    \includegraphics[width=\textwidth]{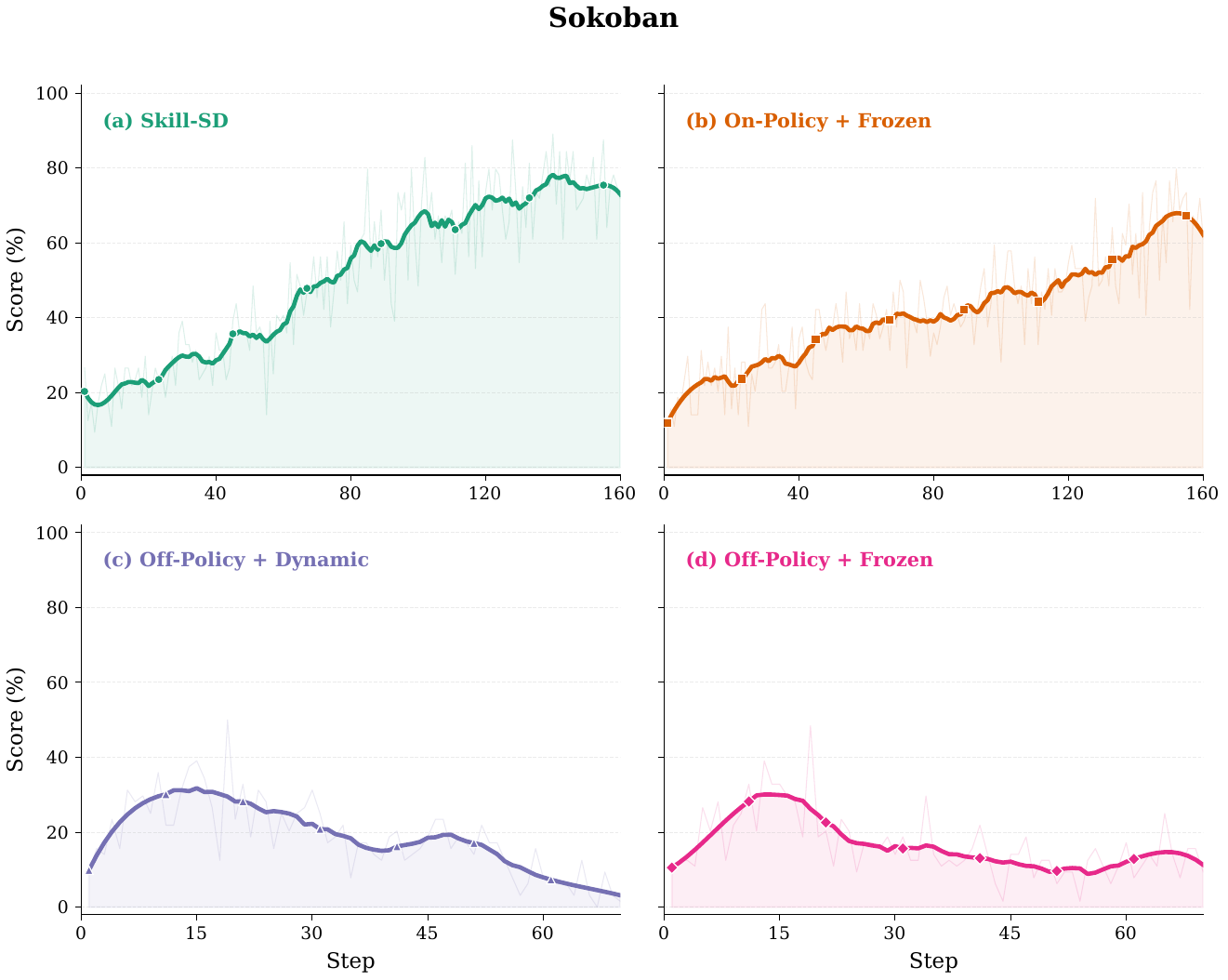}
\end{subfigure}
\caption{Training dynamics of the four teacher--student configurations on AppWorld (left) and Sokoban (right). \method (a, on-policy + dynamic) trains stably to high performance. On-policy + frozen (b) converges stably but to a lower plateau. Both off-policy variants (c,~d) achieve strong early performance but collapse during mid-training. The pattern replicates across both environments.}
\label{fig:ablation_curves}
\end{figure}

\paragraph{Student-owned rollout is essential.}
Teacher-owned (off-policy) rollout achieves strong early performance but collapses during mid-training in both environments (Figure~\ref{fig:ablation_curves}c,d), regardless of whether the teacher is frozen or dynamically refreshed. As the student policy diverges from the teacher's rollout distribution, the importance ratios become increasingly unstable, eventually destabilizing training. The collapse is far more severe on Sokoban (12.5\% accuracy, matching the base model without agent finetuning) than on AppWorld (45.6\%). This asymmetry reflects the environments' different error tolerances: in AppWorld, a suboptimal action from distribution mismatch can often be corrected in subsequent API calls, so performance degrades gradually; in Sokoban, a single wrong push can render the puzzle unsolvable, so even mild distribution drift causes catastrophic failure. This confirms that sustainable self-distillation requires student-owned rollout, and that the cost of off-policy mismatch scales with task irreversibility.

\paragraph{Dynamic synchronization improves on-policy training.}
Within the on-policy regime, dynamic teacher synchronization provides consistent gains over a frozen teacher: +15.8\% accuracy on AppWorld and +12.5\% on Sokoban. The frozen on-policy configuration remains stable but converges to a lower performance plateau.

\paragraph{Skills should guide the teacher, not the student.}
Skill-Augmented GRPO underperforms Vanilla GRPO on both benchmarks (42.1\% vs.\ 50.9\% on AppWorld; 20.3\% vs.\ 51.6\% on Sokoban), showing that directly injecting skills into the student prompt is counterproductive. This is analogous to the privileged-information framework of~\citet{vapnik2009lupi}: Skill-Augmented GRPO optimizes return under the augmented policy $\pi^E_\theta(a|h) := \pi_\theta(a|h, e)$ conditioned on skills, but at evaluation the agent uses the restricted policy $\pi^0_\theta(a|h) := \pi_\theta(a|h, \varnothing)$ without skills.

Even with infinite data, maximizing $J_E(\theta) = \mathbb{E}_{\tau \sim \pi^E_\theta}[R(\tau)]$ does not in general maximize $J_0(\theta) = \mathbb{E}_{\tau \sim \pi^0_\theta}[R(\tau)]$: the two objectives correspond to different conditional policies that share parameters $\theta$. A gradient step $\theta^+ = \theta + \alpha\,g_E$ improves the evaluation objective only to the extent that $g_0^\top g_E > 0$, and the alignment decays when skill-conditioned tokens dominate the computation or compete with the representations needed for planning.

Inspecting the training dynamics (Figure~\ref{fig:training_baselines}) corroborates this: on AppWorld, Skill-Augmented GRPO achieves the highest training score of any method yet collapses on validation, providing direct evidence of overfitting to the skill-conditioned policy; on Sokoban, skills actively disrupt spatial reasoning even during training. \method avoids this pitfall by confining skills to a training-only teacher branch, analogous to System~2$\to$System~1 compilation~\mbox{\citep{yu2024distilling}}, so that the student internalizes useful components through gradient-based distillation rather than prompt-level injection.

\paragraph{Pure self-distillation without reward signals fails.}
Vanilla OPD achieves only 22.8\% accuracy on AppWorld and 21.9\% on Sokoban, far below every reward-optimized variant. Privileged-context self-distillation alone cannot replace reward-driven learning in long-horizon interactive tasks.

\begin{figure}[t]
\centering
\includegraphics[width=0.95\textwidth]{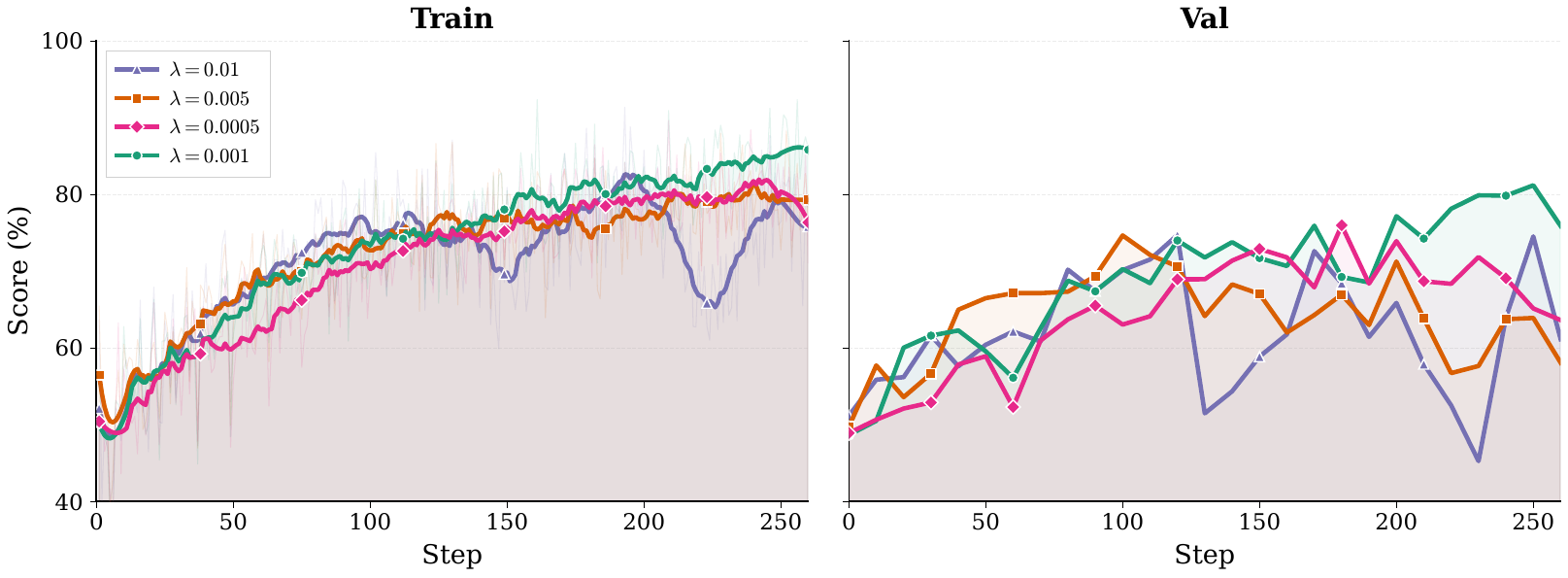}
\caption{Effect of SDL coefficient $\lambda$ on AppWorld training (left) and validation (right) completion rate. $\lambda = 0.001$ achieves the best validation performance; $\lambda = 0.01$ over-regularizes and suppresses exploration, while $\lambda = 0.0005$ provides insufficient teacher guidance.}
\label{fig:lambda_ablation}
\end{figure}

\paragraph{SDL weight $\lambda$ controls the RL--distillation balance.}
Figure~\ref{fig:lambda_ablation} sweeps $\lambda \in \{0.01, 0.005, 0.001, 0.0005\}$ on AppWorld. All four values produce similar training curves in the first 100 steps; after that, $\lambda = 0.001$ continues to improve and reaches the highest validation completion rate ($\sim$80\% with extended training beyond the checkpoint used in Table~\ref{tab:appworld_main}), while $\lambda = 0.01$ plateaus and becomes unstable. Reducing $\lambda$ below the optimum also hurts: $\lambda = 0.0005$ provides insufficient teacher guidance and performs comparably to the larger but noisier $\lambda = 0.005$, both settling around 70--75\% validation completion rate. The pattern reflects a bias--variance trade-off: larger $\lambda$ regularizes toward the teacher but overpowers the RL signal; smaller $\lambda$ recovers nearly pure GRPO and loses the dense supervision benefit. At $\lambda = 0.001$, SDL acts as a mild shaping term that guides the student without dominating the combined gradient.

\subsection{Training Efficiency}
\label{sec:efficiency}

The primary computational overhead of \method relative to vanilla GRPO is the external LLM API calls used to summarize trajectories into skills. In a na\"ive synchronous implementation, these API requests would block the rollout and parameter-update pipeline, substantially increasing wall-clock time per training step. We adopt a fully asynchronous architecture that decouples skill generation from the main training loop: API requests are dispatched in the background and their results are consumed by subsequent iterations rather than the current one. With this design, the API latency is largely overlapped with rollout and gradient computation, so the additional wall-clock overhead per training step is marginal.

\section{Conclusion}
\label{sec:conclusion}

We introduced \method, a framework that combines skill-conditioned self-distillation with GRPO for training multi-turn LLM agents. The teacher is conditioned on analytical skills distilled from trajectories; the student generates on-policy trajectories under the plain task prompt. This separation gives \method per-token supervision without constraining exploration. Dynamic teacher synchronization keeps the distillation signal calibrated as the student improves, and the importance-weighted SDL loss ensures per-token unbiased gradients for the distillation loss (Proposition~\ref{prop:sdl_grad}). These gains are achieved with only 90 AppWorld tasks and 96 Sokoban levels, showing that \method remains effective under the small task pools typical of multi-turn interactive benchmarks. On both AppWorld, an API-based multi-app coordination benchmark, and Sokoban, a spatial planning benchmark with irreversible moves, \method substantially outperforms vanilla GRPO, skill-augmented GRPO, and pure on-policy distillation. The lesson: skills should guide the teacher, not the student. Off-policy variants suffer mid-training collapse despite strong early performance, and frozen teachers converge to lower plateaus, indicating that the co-evolution of teacher and student, rather than any single architectural choice, drives the gains.

\section*{Limitations}
\label{sec:limitations}

We discuss several directions for improvement. First, skill retrieval currently uses a lightweight UCB bandit criterion rather than embedding-based semantic retrieval; while this keeps the training pipeline simple and avoids introducing an additional model, a learned retrieval component may improve skill selection as the bank grows. Second, we use sampled-token distillation (evaluating only the generated token) rather than full-vocabulary distillation; although this is far cheaper for long-horizon traces, partial-vocabulary approximations may improve the fidelity of the self-evolution signal. Third, our gradient analysis is token-level and per-update; extending it to trajectory-level KL bounds under evolving teacher references remains an open theoretical question.

\bibliography{references}
\bibliographystyle{colm2026_conference}

\appendix

\section{Skill Format}
\label{app:skill}

Each skill is a structured JSON object with three fields summarizing a completed trajectory. During training, skills are prepended to the teacher prompt to provide task-specific guidance.

\begin{examplebox}[Example Skill]
\textbf{success\_analysis:} ``The strategic intent to use specialized apps (Splitwise for identifying roommates, Venmo for sending payment requests) and prioritize authentication for restricted APIs was sound, as these tools are designed to streamline expense splitting and peer-to-peer payments.''

\textbf{mistake\_analysis:} ``Key failures stemmed from unvalidated assumptions (using a hypothetical bill amount instead of retrieving the actual receipt from the file system) and neglecting to verify API parameter requirements upfront, leading to repeated authentication errors.''

\textbf{golden\_workflow:} ``1.~Access the file system to retrieve the actual internet bill amount from the stored receipt. 2.~Authenticate with a contact app using valid credentials to obtain roommate emails. 3.~Authenticate with Venmo, calculate each roommate's equal share, then send payment requests via the correct API.''
\end{examplebox}

\section{Trajectory Example and Skill Generation}
\label{app:trajectory}

We present a complete AppWorld interaction trajectory and the skill generated from it. The task requires exporting all Spotify library data to a CSV file, then terminating the account. This trajectory was collected during training and is shown with minimal editing (long API outputs are truncated with \texttt{[...]}).

\smallskip
\noindent\textbf{Task:} \emph{Export a unique list of all the songs in my song and album library and all playlists in my Spotify account into ``\textasciitilde/backups/spotify.csv''. The file should have headers ``Title'' and ``Artists'' (artists separated by ``|''). Terminate my account after this backup is complete.}
\smallskip

\begin{agentbox}[Agent -- Step 1]
To solve this task, we need to: (1)~Access the Spotify account using login. (2)~Retrieve the full list of songs, albums, and playlists. (3)~Extract song titles and artists. (4)~Combine into a unique list. (5)~Export to CSV. (6)~Terminate the account.

\texttt{passwords = apis.supervisor.show\_account\_passwords()}
\end{agentbox}

\begin{envbox}[Environment -- Step 1]
\texttt{[\{"account\_name": "spotify", "password": "CgR\}U-F"\}, \{"account\_name": "file\_system", "password": "+ok\}\#yD"\}, ...]}
\end{envbox}

\begin{agentbox}[Agent -- Step 2]
The Spotify password is \texttt{"CgR\}U-F"}. I will now log in to Spotify.

\texttt{login\_result = apis.spotify.login(username='ka\_burt@gmail.com', password=spotify\_password)}
\end{agentbox}

\begin{envbox}[Environment -- Step 2]
\texttt{\{"access\_token": "eyJhb...", "token\_type": "Bearer"\}}
\end{envbox}

\begin{agentbox}[Agent -- Step 3]
Login was successful. Now let me check the available Spotify APIs.

\texttt{print(apis.api\_docs.show\_api\_descriptions(app\_name='spotify'))}
\end{agentbox}

\begin{envbox}[Environment -- Step 3]
\texttt{[show\_song\_library, show\_album\_library, show\_playlist\_library, show\_song, delete\_account, ...]} \emph{[74 API endpoints; truncated]}
\end{envbox}

\begin{agentbox}[Agent -- Step 4]
The relevant APIs are \texttt{show\_song\_library}, \texttt{show\_album\_library}, \texttt{show\_playlist\_library}, and \texttt{show\_song}. Let me retrieve the song library first.

\texttt{song\_library = apis.spotify.show\_song\_library(access\_token=spotify\_access\_token)}
\end{agentbox}

\begin{envbox}[Environment -- Step 4]
Returns 5 songs with titles and artists. E.g., \texttt{\{"song\_id": 202, "title": "Summer's End", "artists": [\{"name": "Ethan Wallace"\}]\}}, ...
\end{envbox}

\begin{agentbox}[Agent -- Step 5]
Now let me retrieve the album library and playlist library.

\texttt{album\_library = apis.spotify.show\_album\_library(access\_token=...)}

\texttt{playlists = apis.spotify.show\_playlist\_library(access\_token=...)}
\end{agentbox}

\begin{envbox}[Environment -- Step 5]
Returns 5 albums (16 song IDs total) and 5 playlists (39 song IDs total).
\end{envbox}

\begin{agentbox}[Agent -- Step 6]
Now I need to fetch details for each song ID from albums and playlists using \texttt{show\_song}.

\texttt{song = apis.spotify.show\_song(access\_token=spotify\_access\_token, song\_id=song\_id)}
\end{agentbox}

\begin{envbox}[Environment -- Step 6: {\color{errorred}\textbf{Error}}]
\texttt{Exception: Unexpected parameter 'access\_token' passed to the show\_song API. Allowed parameters are: ['song\_id']}
\end{envbox}

\begin{agentbox}[Agent -- Step 7]
I made a mistake in the API call. Let me check the API specification for \texttt{show\_song}.

\texttt{print(apis.api\_docs.show\_api\_doc(app\_name='spotify', api\_name='show\_song'))}
\end{agentbox}

\begin{envbox}[Environment -- Step 7]
\texttt{\{"parameters": [\{"name": "song\_id", "type": "integer", "required": true\}]\}} --- confirms \texttt{show\_song} only takes \texttt{song\_id}, no \texttt{access\_token}.
\end{envbox}

\begin{agentbox}[Agent -- Step 8]
Confirmed. I will fix the code to call \texttt{show\_song(song\_id=...)} without \texttt{access\_token}, and collect all songs from song library, albums, and playlists into a deduplicated list.

\texttt{song = apis.spotify.show\_song(song\_id=song\_id)} \emph{[loops over all song IDs]}
\end{agentbox}

\begin{envbox}[Environment -- Step 8]
\texttt{59} --- 59 unique songs extracted successfully.
\end{envbox}

\begin{agentbox}[Agent -- Step 9]
59 unique songs collected. Now I need to write the CSV. Let me check the \texttt{file\_system} APIs and log in.

\texttt{file\_system\_login = apis.file\_system.login(password=file\_system\_password)}
\end{agentbox}

\begin{envbox}[Environment -- Step 9: {\color{errorred}\textbf{Error}}]
\texttt{Validation error. Reason: username: Field required}
\end{envbox}

\begin{agentbox}[Agent -- Step 10]
The \texttt{file\_system.login} requires both \texttt{username} and \texttt{password}. Let me check the API docs.

\texttt{print(apis.api\_docs.show\_api\_doc(app\_name='file\_system', api\_name='login'))}
\end{agentbox}

\begin{envbox}[Environment -- Step 10]
Confirms: \texttt{username} (string, required) and \texttt{password} (string, required).
\end{envbox}

\begin{agentbox}[Agent -- Step 11]
Now I will log in with both username and password, create the CSV, and write it.

\texttt{apis.file\_system.login(username='ka\_burt@gmail.com', password=...)}

\texttt{apis.file\_system.create\_file(file\_path='\textasciitilde/backups/spotify.csv', content=csv\_content, ...)}
\end{agentbox}

\begin{envbox}[Environment -- Step 11]
\texttt{CSV file created successfully.}
\end{envbox}

\begin{agentbox}[Agent -- Step 12]
Backup complete. Now terminate the Spotify account.

\texttt{apis.spotify.delete\_account(access\_token=spotify\_access\_token)}
\end{agentbox}

\begin{envbox}[Environment -- Step 12]
\texttt{Spotify account terminated successfully.}
\end{envbox}

\begin{agentbox}[Agent -- Step 13]
All tasks completed. Marking as done.

\texttt{apis.supervisor.complete\_task()}
\end{agentbox}

\begin{envbox}[Environment -- Step 13]
\texttt{Execution successful.}
\end{envbox}

\paragraph{Generated skill.} The following skill is produced from this trajectory by the auxiliary summarizer:

\begin{examplebox}[Generated Skill]
\textbf{success\_analysis:} ``The strategic approach of systematically retrieving data from song library, album library, and playlists before combining into a deduplicated export was sound, as was the correct sequencing of backup-then-terminate.''

\textbf{mistake\_analysis:} ``Two API parameter errors occurred: (1) passing \texttt{access\_token} to \texttt{show\_song}, which only accepts \texttt{song\_id}; (2) omitting \texttt{username} when calling \texttt{file\_system.login}. Both stemmed from not consulting API specifications before the first call.''

\textbf{golden\_workflow:} ``1.~Retrieve credentials via supervisor. 2.~Log in to Spotify and collect song/album/playlist libraries. 3.~For each song ID, call \texttt{show\_song(song\_id=...)} without extra parameters. 4.~Log in to file system with both username and password, then write deduplicated CSV. 5.~Delete Spotify account and mark task complete.''
\end{examplebox}

\section{Prompt Templates}
\label{app:prompts}

The AppWorld system prompt is adapted from the official AppWorld benchmark~\citep{trivedi2024appworld}. The full prompt including all API specifications is available in the benchmark repository.

\begin{promptbox}[AppWorld System Prompt (abbreviated)]
I am your supervisor and you are a super intelligent AI Assistant whose job is to achieve my day-to-day tasks completely autonomously.

To do this, you will need to interact with app/s (e.g., spotify, venmo etc) using their associated APIs on my behalf. For this you will undertake a \emph{multi-step conversation} using a python REPL environment.

Here are three key APIs that you need to know to get more information

\texttt{\# To get a list of apps that are available to you.}\\
\texttt{print(apis.api\_docs.show\_app\_descriptions())}

\texttt{\# To get the list of apis under any app, e.g.\ spotify}\\
\texttt{print(apis.api\_docs.show\_api\_descriptions(app\_name='spotify'))}

\texttt{\# To get the specification of a particular api}\\
\texttt{print(apis.api\_docs.show\_api\_doc(app\_name='spotify', api\_name='login'))}

Each code execution will produce an output that you can use in subsequent calls. Using these APIs, you can now generate code, that I will execute, to solve the task. Let's start with the task

My name is: \ph{main\_user.first\_name} \ph{main\_user.last\_name}. My personal email is \ph{main\_user.email} and phone number is \ph{main\_user.phone\_number}.\\
Task: How many playlists do I have in Spotify?

[\emph{A multi-turn few-shot demonstration follows, showing authentication, API discovery, pagination, and calling} \texttt{apis.supervisor.complete\_task()}\emph{. See the AppWorld benchmark repository for the full prompt.}]

\textbf{Key instructions:}\\
(1) Make sure to end code blocks with \texttt{\textasciigrave\textasciigrave\textasciigrave} followed by a newline.\\
(2) Remember you can use the variables in your code in subsequent code blocks.\\
(3) Always look at API specifications before calling an API.\\
(4) Write small chunks of code and only one chunk per step.\\
(5) Many APIs return items in ``pages''. Loop over \texttt{page\_index}.\\
(6) Once you have completed the task, call \texttt{apis.supervisor.complete\_task()}.

My name is: \ph{first\_name} \ph{last\_name}. My personal email is \ph{email} and phone number is \ph{phone}.\\
Task: \ph{input\_str}
\end{promptbox}

\begin{promptbox}[Sokoban System Prompt]
You are an expert agent operating in the Sokoban environment.

\textbf{\# Symbols and Their Meaning}\\
- Walls (\texttt{\#}): These block movement. You can't move through or push anything into walls.\\
- Floor (\texttt{\_}): Open spaces where you can walk and move boxes.\\
- Targets (\texttt{O}): The spots where boxes need to go.\\
- Boxes (\texttt{X}): These are what you need to push onto the targets.\\
- Player (\texttt{P}): That's you! You'll move around the grid to push boxes.\\
- Box on Target (\texttt{T}): A box successfully placed on a target.\\
- Player on Target (\texttt{S}): You standing on a target.

\textbf{\# Your Goal}\\
Your goal is to push all the boxes (\texttt{X}) onto the target spots (\texttt{O}). Once all boxes are on the targets, you win!

\textbf{\# Rules}\\
1. You can only move up, down, left, or right. And don't overthink it.\\
2. You can only push boxes. You can't pull them, so plan ahead to avoid getting stuck.\\
3. You can't walk through or push boxes into walls (\texttt{\#}).\\
4. To avoid traps, do not push boxes into corners or against walls where they can't be moved again.

You will be provided the current map observation.\\
After your thought process, you MUST output your final single action enclosed in \texttt{\textasciigrave\textasciigrave\textasciigrave{} \textasciigrave\textasciigrave\textasciigrave}.\\
Valid Actions: \texttt{up} $|$ \texttt{down} $|$ \texttt{left} $|$ \texttt{right}
\end{promptbox}

\begin{promptbox}[Skill Generation Prompt]
\textbf{\# Trajectory Analysis and Golden Skill Generation}

You are an expert trajectory analyst. Your task is to analyze the given trajectory and generate a structured golden skill that captures key learnings.

\textbf{\#\# Conversation History}\\
\ph{conversation\_history}

\textbf{\#\# Trajectory Outcome}\\
- \textbf{Result}: \ph{result\_status}\\
- \textbf{Total Steps}: \ph{step\_count}\\
- \textbf{Completion Rate}: \ph{completion\_rate}

\textbf{\#\# Output Requirements (JSON)}

You need to deeply analyze the Trajectory Outcome and Conversation History I provided, thinking step by step, and then summarize the skill to guide the model in future encounters.

Output a unique JSON object containing exactly these 3 fields:

1. \texttt{success\_analysis}: A comprehensive paragraph summarizing the \textbf{strategic intent} that led (or would lead) to success.\\
2. \texttt{mistake\_analysis}: A comprehensive paragraph analyzing the \textbf{root causes} of failures or inefficiencies observed in the trajectory.\\
3. \texttt{golden\_workflow}: A concise description of the \textbf{ideal overall process} to complete this task from start to finish.

\textbf{\#\# REMEMBER}

The analysis must be concise. As an expert, the skill you summarize will guide the model's subsequent operations. Therefore, it's crucial to adopt a guiding approach, focusing primarily on the steps that confuse the model.\\
Your success-to-failure output ratio should follow the Completion Rate (\ph{completion\_rate}). If the Completion Rate is high, you should focus on summarizing the success analysis; otherwise, you should focus on summarizing the mistake analysis.
\end{promptbox}

\section{Importance-Weighted SDL under Student and Teacher Rollouts}
\label{app:sdl_gradient}

This appendix formalizes the sampled-token self-distillation term used in Eq.~\eqref{eq:sdl}. The analysis follows the modern distinction between \emph{KL value estimation} and \emph{KL gradient optimization}: a bare \(k_3\) estimator can be a good value estimator while still producing the wrong gradient when differentiated directly~\citep{schulman2020kl,tang2025pitfalls,liu2025rethinking}. That distinction matters here because the auxiliary loss is interpreted as self-evolution. If its gradient is biased under the actual sampling policy, then the loss may still move the model, but it no longer cleanly implements the intended teacher-to-student transfer direction.

\subsection{Estimator family}
\label{app:kl_family}

Let \(q_\theta\) denote the current student distribution, \(p\) a fixed teacher reference distribution, and \(\mu\) the actual sampling distribution at a fixed token position. Define
\begin{equation}
\ell = \log \frac{q_\theta(y)}{p(y)}.
\end{equation}
The standard reverse-KL estimator family includes
\begin{align}
k_1(\ell) &= \ell, \\
k_2(\ell) &= \tfrac{1}{2}\ell^2, \\
k_3(\ell) &= e^{-\ell} - 1 + \ell.
\end{align}
Among these, \(k_3\) is attractive because it is non-negative and is widely used as a low-variance reverse-KL estimator~\citep{schulman2020kl}. But recent analyses show that differentiating \(k_3\) directly does not in general yield the desired reverse-KL gradient~\citep{tang2025pitfalls,liu2025rethinking}.

\subsection{General sampled-token SDL}
\label{app:sdl_quantities}

\begin{definition}[Importance-weighted SDL]
\label{def:sdl}
For a fixed prefix, let \(y_t \sim \mu(\cdot \mid y_{<t})\). Define
\begin{align}
\ell_t &\triangleq \log q_\theta(y_t \mid y_{<t}) - \log p(y_t \mid y_{<t}), \label{eq:ell_app} \\
\rho_t^{\mu} &\triangleq \frac{q_\theta(y_t \mid y_{<t})}{\operatorname{sg}\!\left(\mu(y_t \mid y_{<t})\right)}, \label{eq:rho_app}
\end{align}
where \(\operatorname{sg}(\cdot)\) denotes stop-gradient.
\end{definition}

The choice of \(\rho_t^\mu\) is structurally forced. Its denominator must match the actual sampling distribution \(\mu\), otherwise the importance correction does not close under the sampling expectation. Its numerator must be the trainable student, otherwise the product rule would mix fixed-reference terms with the wrong score function and the resulting gradient would not reduce to the intended reverse-KL form.

\begin{proposition}[Per-token SDL gradient identity]
\label{prop:sdl_grad}
Under Definition~\ref{def:sdl},
\begin{equation}
\nabla_\theta\!\bigl(\rho_t^\mu \cdot k_3(\ell_t)\bigr)
=
\rho_t^\mu \cdot s_\theta(y_t) \cdot \ell_t,
\label{eq:sdl_grad}
\end{equation}
where \(s_\theta(y_t) = \nabla_\theta \log q_\theta(y_t \mid y_{<t})\). Moreover, for any fixed prefix \(y_{<t}\),
\begin{equation}
\expect_{y_t \sim \mu(\cdot \mid y_{<t})}
\!\left[\nabla_\theta\!\bigl(\rho_t^\mu \cdot k_3(\ell_t)\bigr)\right]
=
\nabla_\theta D_{\kl}\!\bigl(q_\theta(\cdot \mid y_{<t}) \,\|\, p(\cdot \mid y_{<t})\bigr).
\end{equation}
\end{proposition}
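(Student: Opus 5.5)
Both claims follow from a direct product-rule computation and a change of measure. Fix the prefix \(y_{<t}\) and the sampled token \(y_t\), and treat \(p\) and \(\mu\) as constants with respect to \(\theta\). This is legitimate because \(p\) is the fixed teacher reference, evaluated at \(\bar\theta\) or through a separate forward pass, and \(\mu\) appears only under \(\operatorname{sg}\).

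\textbf{Step 1: the pointwise identity.} Differentiating Eq.~\eqref{eq:ell_app} gives \(\nabla_\theta \ell_t = s_\theta(y_t)\). Differentiating Eq.~\eqref{eq:rho_app} gives \(\nabla_\theta \rho_t^\mu = \rho_t^\mu\, s_\theta(y_t)\). Since \(k_3'(\ell) = 1 - e^{-\ell}\), the product rule yields
\[
\nabla_\theta\bigl(\rho_t^\mu k_3(\ell_t)\bigr) = \rho_t^\mu s_\theta(y_t)\bigl[k_3(\ell_t) + 1 - e^{-\ell_t}\bigr].
\]
The bracket equals \(e^{-\ell_t} - 1 + \ell_t + 1 - e^{-\ell_t} = \ell_t\), which is Eq.~\eqref{eq:sdl_grad}. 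The exponential terms cancel exactly, and this cancellation is why the \(\rho\)-weighted \(k_3\) differentiates cleanly while bare \(k_3\) does not.

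\textbf{Step 2: the expectation.} Take \(\expect_{y_t\sim\mu}\) of \(\rho_t^\mu s_\theta \ell_t\). Numerically \(\rho_t^\mu = q_\theta(y_t)/\mu(y_t)\), so the expectation becomes
\[
\sum_y q_\theta(y)\,\nabla_\theta \log q_\theta(y)\,\log\frac{q_\theta(y)}{p(y)}.
\]
This is a finite sum over the vocabulary. Separately, differentiate \(D_{\kl}(q_\theta\|p) = \sum_y q_\theta(y)\log\frac{q_\theta(y)}{p(y)}\) directly. Using \(\nabla q_\theta = q_\theta s_\theta\), this gives \(\sum_y q_\theta s_\theta \log(q_\theta/p) + \sum_y q_\theta s_\theta\). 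The second term is \(\nabla_\theta \sum_y q_\theta(y) = \nabla_\theta 1 = 0\), so the two expressions coincide.

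\textbf{Main obstacle.} The delicate point is the support condition needed for the change of measure in Step 2. We need \(\mu(y)>0\) wherever \(q_\theta(y)>0\), so that \(\sum_y \mu(y)\cdot q_\theta(y)/\mu(y)\) recovers the full sum over the support of \(q_\theta\). We also need \(p>0\) on that support so that \(\ell_t\) is finite. I would state both as standing assumptions. They hold automatically for softmax policies, whose probabilities are strictly positive. The second point to make explicit is that \(\operatorname{sg}\) means \(\mu\) contributes no gradient, while its value still equals the sampling density. With these two points recorded, the rest is routine.
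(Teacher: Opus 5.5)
Your proposal is correct and follows the paper's proof essentially step for step. Both use the product rule with $\nabla_\theta\rho_t^\mu=\rho_t^\mu s_\theta(y_t)$ and $k_3'(\ell)=1-e^{-\ell}$, so the bracket collapses to $\ell_t$, and then the change of measure $\mu\cdot q_\theta/\mu=q_\theta$ gives the expectation. Your proposal is slightly more complete in two places the paper asserts or leaves implicit: you verify $\sum_y q_\theta s_\theta \ell = \nabla_\theta D_{\kl}(q_\theta\|p)$ explicitly via $\sum_y q_\theta s_\theta=\nabla_\theta 1=0$, and you state the support and positivity assumptions needed for the change of measure.
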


\begin{proof}
We apply the product rule:
\begin{equation}
\nabla_\theta(\rho_t^\mu \cdot k_3) = (\nabla_\theta \rho_t^\mu) \cdot k_3 + \rho_t^\mu \cdot (\nabla_\theta k_3).
\end{equation}
Because the denominator of \(\rho_t^\mu\) is frozen under stop-gradient,
\begin{equation}
\nabla_\theta \rho_t^\mu = \rho_t^\mu \cdot s_\theta(y_t).
\end{equation}
Since \(\log p(y_t \mid y_{<t})\) is computed at the pre-update parameters and does not depend on the current \(\theta\), \(\nabla_\theta \ell_t = s_\theta(y_t)\). Using \(k_3'(\ell) = 1 - e^{-\ell}\), we obtain
\begin{equation}
\nabla_\theta k_3(\ell_t) = (1 - e^{-\ell_t}) \cdot s_\theta(y_t).
\end{equation}
Substituting both expressions gives
\begin{align}
\nabla_\theta(\rho_t^\mu \cdot k_3)
&=
\rho_t^\mu \cdot s_\theta(y_t)\cdot k_3(\ell_t)
+
\rho_t^\mu \cdot (1 - e^{-\ell_t}) \cdot s_\theta(y_t) \\
&=
\rho_t^\mu \cdot s_\theta(y_t)
\left[k_3(\ell_t) + 1 - e^{-\ell_t}\right].
\end{align}
Using \(k_3(\ell_t) = e^{-\ell_t} - 1 + \ell_t\), the bracket simplifies to \(\ell_t\), proving Eq.~\eqref{eq:sdl_grad}. Taking expectation under the sampling distribution yields
\begin{align}
\expect_{y_t \sim \mu(\cdot \mid y_{<t})}
\!\left[\nabla_\theta(\rho_t^\mu \cdot k_3)\right]
&=
\sum_{y_t} \mu(y_t \mid y_{<t})
\frac{q_\theta(y_t \mid y_{<t})}{\mu(y_t \mid y_{<t})}
s_\theta(y_t)\ell_t \\
&=
\sum_{y_t} q_\theta(y_t \mid y_{<t}) s_\theta(y_t)\ell_t \\
&=
\nabla_\theta D_{\kl}\!\bigl(q_\theta(\cdot \mid y_{<t}) \,\|\, p(\cdot \mid y_{<t})\bigr),
\end{align}
which proves the conditional unbiasedness claim.
\end{proof}

\subsection{On-policy and off-policy corollaries}
\label{app:corollaries}

\paragraph{On-policy main method.}
For the main \method update, \(\mu = \pi_{\theta_{\text{old}}}^{\text{stu}}(\cdot \mid x,y_{<t})\) and \(p = \pi_{\bar{\theta}}^{\text{tea}}(\cdot \mid x,S(x),y_{<t})\). Therefore
\begin{equation}
\rho_t^{\text{on}}
=
\frac{\pi_\theta^{\text{stu}}(y_t \mid x,y_{<t})}
{\operatorname{sg}\!\left(\pi_{\theta_{\text{old}}}^{\text{stu}}(y_t \mid x,y_{<t})\right)}
\end{equation}
is the importance weight for the main self-evolution signal. Numerically, \(\rho_t^{\text{on}}\) equals the GRPO ratio \(r_t\), but the role is different: \(\rho_t^{\text{on}}\) weights the auxiliary reverse-KL term, whereas \(r_t\) defines the clipped reinforcement-learning surrogate.

\paragraph{Off-policy comparison branch.}
For the teacher-rollout comparison branch, \(\mu = \pi_{\theta_{\text{old}}}^{\text{tea}}(\cdot \mid x,S(x),y_{<t})\), which yields
\begin{equation}
\rho_t^{\text{off}}
=
\frac{\pi_\theta^{\text{stu}}(y_t \mid x,y_{<t})}
{\operatorname{sg}\!\left(\pi_{\theta_{\text{old}}}^{\text{tea}}(y_t \mid x,S(x),y_{<t})\right)}.
\end{equation}
This is the off-policy correction emphasized in RPG-style analyses~\citep{zhang2025rpg}: without it, the differentiated \(k_3\) term would not follow the intended reverse-KL transfer direction under teacher sampling.

\subsection{Why SDL and GRPO need distinct interpretations}
\label{app:dual_ratio}

\method uses two different quantities for two different jobs:
\begin{itemize}
    \item The main-branch SDL weight \(\rho_t^{\text{on}} = \pi_\theta^{\text{stu}} / \pi_{\theta_{\text{old}}}^{\text{stu}}\) must match the \emph{student sampling distribution}.
    \item The off-policy SDL weight \(\rho_t^{\text{off}} = \pi_\theta^{\text{stu}} / \pi_{\theta_{\text{old}}}^{\text{tea}}\) must match the \emph{teacher sampling distribution}.
    \item The GRPO clipping ratio \(r_t = \pi_\theta^{\text{stu}} / \pi_{\theta_{\text{old}}}^{\text{stu}}\) must stay centered at the \emph{old student policy}, even when we apply DAPO-style clip-higher asymmetric bounds~\citep{yu2025dapo}.
\end{itemize}
If one naively reused the teacher-denominator ratio for clipping, the trust region would no longer be centered at \(1\) because the teacher and student differ by prompt conditioning. This is exactly the type of mis-centering that recent off-policy analyses warn against~\citep{yan2025luffy,zhang2025rpg}. In the main on-policy branch, \(\rho_t^{\text{on}}\) happens to equal \(r_t\) numerically, but the interpretation is still different: the former defines the self-evolution update, whereas the latter defines the clipped RL surrogate.

\begin{table}[ht]
\centering
\small
\caption{Importance-weighted SDL and student-centered GRPO in \method.}
\label{tab:ratio_comparison}
\begin{tabular}{lll}
\toprule
\textbf{Quantity} & \textbf{Definition} & \textbf{Purpose} \\
\midrule
Main SDL weight \(\rho_t^{\text{on}}\) & \(\pi_\theta^{\text{stu}} / \operatorname{sg}(\pi_{\theta_{\text{old}}}^{\text{stu}})\) & Match student sampling in the main method \\
Off-policy SDL weight \(\rho_t^{\text{off}}\) & \(\pi_\theta^{\text{stu}} / \operatorname{sg}(\pi_{\theta_{\text{old}}}^{\text{tea}})\) & Match teacher sampling in the comparison branch \\
GRPO clipping ratio \(r_t\) & \(\pi_\theta^{\text{stu}} / \operatorname{sg}(\pi_{\theta_{\text{old}}}^{\text{stu}})\) & Center trust region at old student \\
\bottomrule
\end{tabular}
\end{table}

\section{Additional Experimental Details}
\label{app:exp_details}

\paragraph{Variant definition.}
Table~\ref{tab:variant_def} summarizes the exact axes varied in the current AppWorld study.

\begin{table}[ht]
\centering
\small
\caption{Variant definitions for the AppWorld ablation matrix.}
\label{tab:variant_def}
\begin{tabular}{llll}
\toprule
\textbf{Variant} & \textbf{Rollout Source} & \textbf{Teacher Refresh} & \textbf{Distillation} \\
\midrule
Off-policy + dynamic & Teacher & Dynamic self-evolution & Yes \\
Off-policy + frozen & Teacher & Frozen & Yes \\
On-policy + dynamic & Student & Dynamic self-evolution & Yes \\
On-policy + frozen & Student & Frozen & Yes \\
Skill-Augmented GRPO & Skill-conditioned student & N/A & No \\
Vanilla GRPO & Student & N/A & No \\
Vanilla OPD & Teacher/student re-forward pair & Frozen & Yes, no RL \\
Baseline & Qwen3-4B-Instruct-2507 & N/A & No \\
\bottomrule
\end{tabular}
\end{table}

\paragraph{AppWorld protocol.}
We use the official AppWorld train/dev split~\citep{trivedi2024appworld}. Accuracy is pass@1 on the dev set. Completion rate is the fraction of task-specific unit tests satisfied by the final environment state.

\paragraph{Loss reduction and clipping.}
Both \(\mathcal{L}_{\text{GRPO}}\) and \(\mathcal{L}_{\text{SDL}}\) use \texttt{verl}'s \texttt{token-mean} reduction: losses are summed over valid action tokens and divided by the total valid-token count. For GRPO, we use DAPO's clip-higher setting~\citep{yu2025dapo}, i.e., asymmetric clipping with \(\varepsilon_h > \varepsilon_l\).

\section{Training Hyperparameters}
\label{app:hyperparams}

We implement \method using the rLLM framework\footnote{\url{https://github.com/rllm-org/rllm}} with the verl backend\footnote{\url{https://github.com/volcengine/verl}}. Trajectory summarization into skills is performed by Seed1.8~\citep{seed2026seed18}. GRPO follows~\citet{shao2024deepseekmath}. Multi-turn rollouts use a \emph{Token-In-Token-Out} (TITO) mode, where the model reads and generates raw tokens rather than going through the chat completion API. This avoids the token-ID inconsistencies that arise when applying chat templates to multi-turn message histories---a discrepancy that can cause the training distribution to diverge from the rollout distribution and destabilize advantage estimation.

\begin{table}[h]
\centering
\small
\caption{Training hyperparameters for AppWorld and Sokoban experiments.}
\label{tab:hyperparams}
\begin{tabular}{lc}
\toprule
\textbf{Hyperparameter} & \textbf{Value} \\
\midrule
Group size $G$ (rollouts per task) & 4 \\
Train batch size & 16 \\
Optimization epochs per batch & 1 \\
Max prompt length & 8192 tokens \\
Max response length & 20480 tokens \\
Dynamic batch sizing & Enabled \\
Ulysses sequence parallel size & 4 \\
Max tokens per GPU & 14336 \\
Training temperature & 1.0 \\
Evaluation temperature & 0.0 (greedy) \\
GPU memory utilization & 0.7 \\
Entropy coefficient & 0.0 \\
DAPO clip $\varepsilon_l$ & 0.2 \\
DAPO clip $\varepsilon_h$ & 0.28 \\
Advantage normalization $\varepsilon$ & $1 \times 10^{-8}$ \\
Learning rate & $1 \times 10^{-6}$ \\
Weight decay & 0.0 \\
UCB exploration coefficient $c$ & $\sqrt{2}$ \\
SDL coefficient $\lambda$ & $0.001$ \\
Optimizer & AdamW \\
GPU & $8{\times}$NVIDIA L40S \\
\bottomrule
\end{tabular}
\end{table}

The UCB exploration coefficient $c = \sqrt{2}$ follows the UCB1 algorithm of~\citet{auer2002ucb1}, which is theoretically optimal for rewards bounded in $[0, 1]$ via Hoeffding's inequality, guaranteeing logarithmic regret.

\end{document}